\newtheorem{theorem}{Theorem}[section]
\newtheorem{assumption}{Assumption}[section]
\newtheorem{lemma}[theorem]{Lemma}
\title{Generalizable Task Representation Learning for Offline Meta-Reinforcement Learning with Data Limitations}
\author{
    %Authors
    % All authors must be in the same font size and format.
    Renzhe Zhou,  %\textsuperscript{\rm 1,\rm 2}
    Chen-Xiao Gao,
    Zongzhang Zhang\thanks{Zongzhang Zhang is the corresponding author.},
    Yang Yu
}
\title{My Publication Title --- Single Author}
\author {
    Author Name
}
\title{My Publication Title --- Multiple Authors}
\author {
    % Authors
    First Author Name\textsuperscript{\rm 1,\rm 2},
    Second Author Name\textsuperscript{\rm 2},
    Third Author Name\textsuperscript{\rm 1}
}
\begin{document}

\maketitle

\begin{abstract}
Generalization and sample efficiency have been long-standing issues concerning reinforcement learning, and thus the field of Offline Meta-Reinforcement Learning~(OMRL) has gained increasing attention due to its potential of solving a wide range of problems with static and limited offline data. Existing OMRL methods often assume sufficient training tasks and data coverage to apply contrastive learning to extract task representations. However, such assumptions are not applicable in several real-world applications and thus undermine the generalization ability of the representations. In this paper, we consider OMRL with two types of data limitations: limited training tasks and limited behavior diversity and propose a novel algorithm called GENTLE for learning generalizable task representations in the face of data limitations. GENTLE employs Task Auto-Encoder~(TAE), which is an encoder-decoder architecture to extract the characteristics of the tasks. Unlike existing methods, TAE is optimized solely by reconstruction of the state transition and reward, which captures the generative structure of the task models and produces generalizable representations when training tasks are limited. To alleviate the effect of limited behavior diversity, we consistently construct pseudo-transitions to align the data distribution used to train TAE with the data distribution encountered during testing. Empirically, GENTLE significantly outperforms existing OMRL methods on both in-distribution tasks and out-of-distribution tasks across both the given-context protocol and the one-shot protocol. 
\end{abstract}

\section{Introduction}
Despite the success of Reinforcement Learning~(RL) in scenarios where online interaction is consistently available, RL is hampered from real-world applications such as healthcare and robotics controlling due to its sample complexity~\cite{sac} and inferior generalization ability~\cite{generalization_survey}. The past decade witnessed tremendous effort from researchers to pave the path for RL toward real-world applications. For example, offline RL~\cite{bcq,cql,td3bc,iql}, which optimizes the policies with a pre-collected and static dataset, provides a solution to relieving RL from costly online interactions, whereas meta-RL~\cite{rl2, maml, pearl, varibad, towards}, which involves training policies over a wide range of tasks, significantly enhances the generalization ability of the learned policies. 

Offline Meta-Reinforcement Learning~(OMRL)~\cite{mbml, focal, borel, macaw, corro}, as an intersection of offline RL and meta-RL, is promising to combine the good of both worlds. In OMRL, we are provided with datasets collected in various tasks which share some similarity in the underlying structures in dynamics or reward mechanisms, and aim to optimize the meta-policy. The meta-policy is later tested in tasks drawn from the same task distribution. Previous related methods~\cite{mbml, focal, corro} often interpret the OMRL challenge as task representation learning and meta-policy optimization. The former step aims to obtain indicative task representations from the dataset, while the latter optimizes a meta-policy on top of the learned representation. 
% bout the former step, they usually use metric learning approaches to extract task representations from offline datasets. 
However, existing methods often assume a sufficient number of training tasks as well as sufficient diversity of behavior policy that collects the datasets, which is not realistic in real-world applications. We find that when the assumptions are not satisfied, the representations tend to overfit and fail to generalize on unseen testing tasks. 

In light of this, we propose a new approach to \textbf{Gen}eralizable \textbf{T}ask representations \textbf{Le}arning~(GENTLE) to enable effective task recognition in the face of limitations in training task quantity and behavior diversity. GENTLE follows the existing paradigm of OMRL and consists of two interleaving optimization stages: (1) task representation learning and (2) offline meta-policy optimization on top of the learned representations. For (1), we introduce a novel structure, Task Auto-Encoder~(TAE) to extract representations from the context information. TAE is optimized to reconstruct the state transition and rewards on the probing data rather than contrastive loss, which models the generative structure of the environment and prevents the encoder from overfitting to miscellaneous features when the number of training tasks is limited. To alleviate TAE's training from overfitting to the behavior policy distribution, we augment the training data via policy, dynamics, and reward relabeling, forcing TAE to learn to exploit the difference in dynamics and rewards rather than input data distributions. For (2), we adopt TD3+BC for its simplicity to optimize a meta-policy with task representations predicted by the TAE. 

For evaluations, we compare GENTLE against other baseline algorithms in a set of continuous control tasks with two types of evaluation protocols: \textit{given-context} protocol where the context is collected by an ad-hoc expert policy in the target environment, and \textit{one-shot} protocol where the context is collected by the meta-policy. Experimental results demonstrate the superiority of GENTLE over the baseline methods, and the ablation study also discloses the necessity of each component of GENTLE. 

\section{Related Work}
\subsubsection{Offline Meta-Reinforcement Learning. }
Generalization is a known issue about RL agents~\cite{generalization_survey}, and thus meta-RL is proposed to enhance the generalization ability of RL agents. Current meta-RL research can be categorized into two types: gradient-based approaches~\cite{maml, macaw, merpo}, which focus on fast adaptation to new tasks via few-shot gradient descent, and context-based approaches~\cite{rl2, pearl}, which formalize the meta-RL tasks as contextual Markov Decision Processes (MDPs) and learn to encode task representations from histories. The combination of meta-RL and offline setting leads to Offline Meta-Reinforcement Learning~(OMRL), a framework where only static task datasets are available to learn a meta-policy. Most of the previous OMRL methods follow the context-based approach~\cite{mbml,  focal, borel, smac, corro}. Overall, the workflow of these methods can break down into two procedures. The first is to learn a task representation encoder with the offline dataset and augment the states with the learned representations, while the second step is to optimize the meta-policy with offline RL algorithms. MACAW~\cite{macaw}, on the other hand, follows the gradient-based approach and extends MAML~\cite{maml} to the offline setting. Our method falls in the first category, with additional considerations for the data limitations. Similar to our motivations, MBML~\cite{mbml}, BOReL~\cite{borel}, and CORRO~\cite{corro} also identify the impact of behavior policy on task identification,  and alleviate this issue either by reward relabeling or generative relabeling on offline datasets. MIER~\cite{mier} and SMAC~\cite{smac}, on the other hand, focus on the meta-testing stage and mitigate this issue by relabeling the context collected online.

\subsubsection{Task Representation Learning. }
Successful meta-RL agent relies on the learned task representations to make adaptive decisions in different tasks. On how to encode the context and derive task representations, various methods differ in their learning objectives. Earlier methods, such as RL$^2$~\cite{rl2} and PEARL~\cite{pearl}, apply the same RL objective for the representation encoder. Specifically, RL$^2$ passes the gradient of the encoder through the representation and optimizes the encoder in an end-to-end fashion, while PEARL trains the encoder via critic loss combined with an additional information bottleneck term. Alternative approaches, like ESCP~\cite{escp}, employ the objective of maximizing relational matrix determinant of latent representations. In offline RL, a predominant approach to task representation learning is contrastive-style training~\cite{mbml, focal, corro}. These methods take advantage of the static datasets, construct positive pairs and negative pairs via relabeling or generative augmenting, and afterward apply contrastive objectives to optimize the encoder.
In this paper, we propose to extract representations via reconstruction, thus utilizing the generative structure of the underlying dynamics and facilitating the generalization of the representation. This idea has also been explored in past literature~\cite{varibad, sept, borel, mtsac, metadiffuser}. However, we apply this idea in the offline setting with off-policy data and characterize it theoretically. 

\section{Preliminaries}
\subsection{Problem Formulation}
The RL problem can be formulated as a Markov Decision Process (MDP), which can be characterized by a tuple $M=(\mathcal{S},\mathcal{A},T,R,\mu_0,\gamma) $, where $\mathcal{S}$ is the state space, $\mathcal{A}$ is the action space, $T(s'|s, a)$ is the transition function, $R(s, a)$ is the reward function, $\mu_0(s)$ is the initial state distribution, and $\gamma\in[0,1]$ is the discount factor. The policy $\pi(a|s)$ is a distribution over actions. The agent's goal is to find the optimal policy that maximizes the expected cumulative reward (a.k.a. return) $\mathop{\max}_\pi\eta(M,\pi)= \mathbb{E}_{\pi, M}[\sum_{t=0}^\infty \gamma^t R(s_t,a_t)]$, where the expectation is taken over the trajectory distribution which is induced by $\pi$ in $M$. The Q-function is defined as the expected return starting from state $s$, taking action $a$, and  thereafter
following policy $\pi$: $Q_\pi(s,a)=\mathbb{E}_{\pi, M}[\sum_{t'=t}^\infty \gamma^{t'-t} R(s_{t'},a_{t'})|s_t=s,a_t=a]$.

In Offline Meta-Reinforcement Learning (OMRL), we consider a set of tasks where each task is an MDP $M_i=(\mathcal{S},\mathcal{A},T_i,R_i,\mu_0,\gamma)$ and sampled from a task distribution $M_i\sim P(\mathcal{M})$. We assume the tasks only differ in transition functions and reward functions, and abbreviate them as $M=(T, R)$. We will use the term \textit{model} to refer to $M$ hereafter. During offline meta-training, we are given $N$ training tasks $\{M_i\}_{i=1}^N$ sampled from $P(\mathcal{M})$ and the corresponding offline datasets $\{D_i\}_{i=1}^N$ generated by behavior policies. Using the fixed offline datasets, the algorithm needs to train a meta-policy $\pi_\text{meta}$. During meta-testing, given a testing task $M\sim P(\mathcal{M})$, the agent first needs to identify the environment with context information $B^c$ before evaluation. Finally, the goal of meta-RL is to find the optimal meta-policy that maximizes the expected return over the task distribution:
\begin{equation}
    \mathop{\max}_{\pi_\text{meta}}\eta(\pi_\text{meta})= \mathbb{E}_{M\sim P(\mathcal{M})}[\eta(M,\pi_\text{meta})].
\end{equation}

\begin{figure*}[t]    
    \centering
\includegraphics[width=0.9\textwidth]{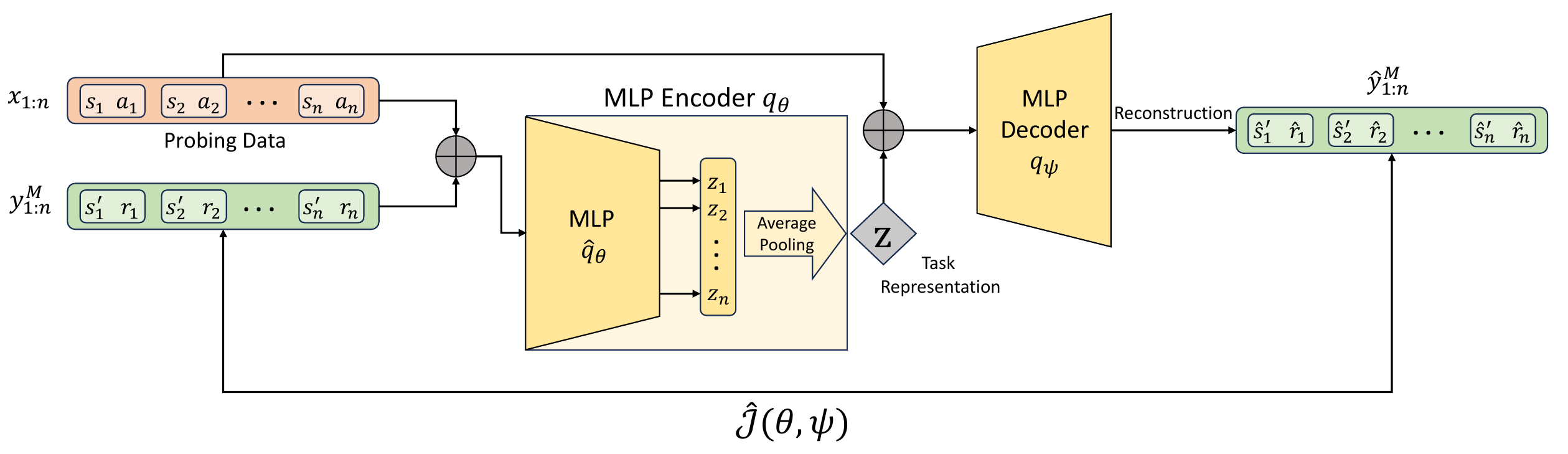}
    \caption{Illustration of TAE. The inputs of TAE are the concatenation of $x_{1:n}$ and $y_{1:n}^M$. The task representation $z$ is obtained by average pooling over the $n$ outputs of $\hat{q}_\theta$. And $q_\psi$ seeks to reconstruct $y_{1:n}^M$ given the probing data $x_{1:n}$ and representation $z$.}
    \label{sec4:fig_tae}
\end{figure*}

\subsection{TD3+BC}
Offline RL trains a policy solely on a fixed dataset $D$. Due to the mismatch between the distributions of the dataset and the current policy, the agent tends to falsely evaluate the value of out-of-distribution actions and thus misleads the policy optimization~\cite{bcq, bear}. To address the problem, TD3+BC~\cite{td3bc} adds a regularization term to the objective in TD3~\cite{td3} to constrain the policy around the dataset:
\begin{equation}
    \begin{aligned}
    \max_\pi\mathbb{E}_{(s,a)\sim D}[\lambda Q(s, \pi(s)) - (\pi(s)-a)^2],
    \end{aligned}
\end{equation}
where $D$ is the offline dataset and $\lambda$ is the coefficient balancing between the TD3's objective and the regularization term. We choose TD3+BC as our backbone offline RL algorithm due to its simplicity and effectiveness. And for generality, we use the stochastic form of policy to represent the meta-policy as $\pi(\cdot|s,z)$ hereafter.

\section{Method}

In this section, we will elaborate on the core ingredients of our GENTLE method, designed to learn generalizable task representations from offline datasets. %GENTLE relies on a reconstruction-oriented learning objective to train the TAE, a lower bound of the mutual information metric~(Section~\ref{sec4.1}). We then elaborate on the architecture design and the practical implementation of TAE~(Section~\ref{sec:4.2}). To combat the influence brought by the distribution of the behavior policy, we propose to augment the data used for TAE training~(Section~\ref{sec4.3}). Finally, we summarize the workflow of GENTLE in Section~\ref{sec4.4}. 

\subsection{Task Auto-Encoder}\label{sec4.1}
We begin with our TAE, which applies an encoder-decoder architecture to learn task representations. First, we define $x\in \mathcal{S}\times\mathcal{A}$ as the probing data and $\rho(x)$ as its distribution. We additionally assume that the distribution of probing data is task-agnostic:

\begin{assumption}{(Independency of Probing Data)}\label{sec4:assump1}
The distribution of probing data is independent of the model, i.e., $p(x)=p(x|M)$ and $p(M|x)=p(M)$. 
\end{assumption}

For each entry of the probing data $x=(s, a)$, we first sample a model $M=(T_M, R_M)$ from the task distribution, and continue to sample a state transition $s'\sim T_M(\cdot|s, a)$ and the reward $r=R_M(s, a)$ to assign the label of $x$ as $y=(s', r)$. We use $x_{1:n}$ to represent $n$ i.i.d. samples from $\rho(x)$ and $y_{1:n}$ to represent the corresponding labels of $x_{1:n}$. Throughout this paper, we make a further assumption that the model of the environment is deterministic, i.e.:
\begin{assumption}{(Deterministic Model)}\label{sec4:assump2}
For input $x=(s, a)\in \mathcal{S}\times\mathcal{A}$ and model $M=(T_M, R_M)\in\mathcal{M}$, $p(y|x, M)=\delta(y=y^M)$. Here $y^M=M(x)$ and $M( x)\coloneqq (T_M(s, a), R_M(s, a))$. 
\end{assumption}

With all these prepared, we now describe TAE's architecture. In TAE, the encoder $q_\theta(z|x_{1:n}, y_{1:n})$ takes a batch of probing data $x_{1:n}$ and their labels $y_{1:n}$ as inputs, and outputs a distribution of the representation vector. The decoder $q_\psi(\hat{y}_k|x_k, z)$, on the other hand, takes the predicted representation and one of the probing data $x_k$ as input, and finally outputs the distribution of the predicted label. 

We train our TAE by maximizing the log-likelihood of the ground-truth label. Specifically, we maximize the following objective in terms of the parameters $\theta$ and $\psi$:
\begin{equation}\label{sec4:eq_tae}
    \mathcal{J}(\theta, \psi) = \mathbb{E}_{x_{1:n}, M, z}\Bigg[\sum_{k=1}^n\log q_\psi(y^M_k|z, x_k)\Bigg],
\end{equation}
where $x_{1:n}\sim \rho(x), {M}\sim P(\mathcal{M}), z\sim q_\theta(z|x_{1:n}, y^M_{1:n})$. Intuitively, the encoder takes in the labeled data and predicts the task representation $z$, while the decoder seeks to reconstruct the ground-truth label via the maximum log-likelihood over the batch of probing data. 

We now provide a theoretical characterization of the training process. 

\begin{theorem}\label{sec4:theorem}
Let $x_{1:n}$ denote i.i.d. probing data sampled from distribution $\rho(x)$ and  the probing data is labeled with sampled models from $P(\mathcal{M})$ to construct $y_{1:n}$. With Assumptions \ref{sec4:assump1} and \ref{sec4:assump2}, optimizing $\mathcal{J}(\theta, \psi)$ in terms of the representation encoder $q_{\theta}$ and the decoder $q_\psi$ corresponds to optimizing a lower bound of the mutual information between the task representation and the model $I(M; z)$: 
$$I(M; z)\geq I(M; y_{1:n}|x_{1:n}) + \mathcal{J}(\theta, \psi). $$
\end{theorem}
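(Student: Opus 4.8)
The plan is to recognize the stated bound as an evidence-lower-bound (ELBO) style inequality and to prove it by a variational reconstruction argument, with the decoder $q_\psi$ playing the role of the variational posterior for the labels. The natural object to manipulate is the conditional mutual information $I(M; z \mid x_{1:n})$, since the representation $z$ is produced from the labeled batch $(x_{1:n}, y_{1:n})$ while the probing inputs $x_{1:n}$ are, by Assumption \ref{sec4:assump1}, drawn independently of the model. First I would record the generative structure as a Markov chain $M \to (x_{1:n}, y_{1:n}) \to z$ in which $z \sim q_\theta(\cdot \mid x_{1:n}, y_{1:n})$, so that $z$ is conditionally independent of $M$ given $(x_{1:n}, y_{1:n})$; this is the single structural fact that drives the decomposition.

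The core step is a chain-rule decomposition performed with $x_{1:n}$ held in the conditioning. Writing
$$I(M; y_{1:n}, z \mid x_{1:n}) = I(M; y_{1:n} \mid x_{1:n}) + I(M; z \mid y_{1:n}, x_{1:n}) = I(M; z \mid x_{1:n}) + I(M; y_{1:n} \mid z, x_{1:n})$$
and using $I(M; z \mid y_{1:n}, x_{1:n}) = 0$ from the Markov structure, I obtain $I(M; z \mid x_{1:n}) = I(M; y_{1:n}\mid x_{1:n}) - I(M; y_{1:n}\mid z, x_{1:n})$. Next I would invoke Assumption \ref{sec4:assump2}: since $y_{1:n}$ is a deterministic function of $(x_{1:n}, M)$, the conditional entropy $H(y_{1:n}\mid z, x_{1:n}, M)$ vanishes, so $I(M; y_{1:n}\mid z, x_{1:n}) = H(y_{1:n}\mid z, x_{1:n})$. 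Finally I would upper-bound this conditional entropy by the cross-entropy against the decoder, using nonnegativity of the KL divergence, giving $H(y_{1:n}\mid z, x_{1:n}) \le -\mathbb{E}[\log q_\psi(y_{1:n}\mid z, x_{1:n})]$, and then factorize the decoder over the $n$ probing points so that the right-hand side becomes exactly $-\mathcal{J}(\theta,\psi)$. Substituting back yields $I(M; z\mid x_{1:n}) \ge I(M; y_{1:n}\mid x_{1:n}) + \mathcal{J}(\theta,\psi)$, which is the theorem with conditional information on the left.

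The step I expect to be the main obstacle is the passage from the conditional quantity $I(M; z\mid x_{1:n})$ to the unconditional $I(M; z)$ that appears in the statement. Assumption \ref{sec4:assump1} gives $I(M; x_{1:n}) = 0$, which I would feed through the identity $I(M; z\mid x_{1:n}) = I(M; z) + I(M; x_{1:n}\mid z)$; the delicate point is that $z$ genuinely depends on $x_{1:n}$, so the correction term $I(M; x_{1:n}\mid z)$ need not vanish in general, and a careless appeal to independence could silently reverse the inequality. The cleanest rigorous reading is therefore to treat the probing inputs as an exogenous, task-independent design and to interpret $I(M; z)$ as the information carried by the representation for a fixed probing set, namely $I(M; z\mid x_{1:n})$; Assumption \ref{sec4:assump1} is precisely what guarantees that this design injects no task information, so that the two readings coincide in the intended formulation. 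I would make this identification explicit, since it is where the independence assumption earns its place.
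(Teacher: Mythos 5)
Your proposal is correct and follows essentially the same route as the paper's proof: the same conditional chain-rule/data-processing decomposition $I(M;z\mid x_{1:n}) = I(M;y_{1:n}\mid x_{1:n}) - I(M;y_{1:n}\mid z,x_{1:n})$, the same use of the deterministic-model assumption to reduce $I(M;y_{1:n}\mid z,x_{1:n})$ to $H(y_{1:n}\mid z,x_{1:n})$, and the same variational bound via KL nonnegativity with the factorized decoder yielding $\mathcal{J}(\theta,\psi)$. The ``delicate point'' you flag is exactly where the paper itself is least rigorous---it passes from $I(M;z)$ to $I(M;z\mid x_{1:n})$ by asserting $H(M\mid z)=H(M\mid z,x_{1:n})$ ``due to'' Assumption~\ref{sec4:assump1}, i.e., it silently makes the identification you propose to state explicitly---so your treatment of that step is, if anything, more careful than the paper's.
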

The proof is deferred to the appendix%~\ref{appsec:main_theorem_proof}
\footnote{\url{https://www.lamda.nju.edu.cn/zhourz/AAAI24-supp.pdf}}. In Theorem~\ref{sec4:theorem}, we show that we can lower-bound the original mutual information $I(M; z)$ via two terms. The first term, $I(M; y_{1:n}|x_{1:n})$, measures how discriminative the probing data $x_{1:n}$ is in terms of models, while the second term is precisely the training objective of TAE which is a reconstruction-oriented objective. Thus, optimizing Equation~\eqref{sec4:eq_tae} corresponds to optimizing the lower bound of the mutual information between the extracted representation $z$ and the tasks, which justifies the effectiveness of our objective. It is noteworthy that optimizing mutual information is intractable in practice. A prior method, CORRO~\cite{corro}, derives a tractable lower bound via InfoNCE~\cite{cpc}, which employs a generative model to construct negative samples conditioned on the data within the task. Such a method focuses on all of the discriminative aspects of the input data, without consideration for the generative structure shared by different models. In contrast, our approach explicitly learns a decoder to account for this, thus enabling the encoder to closely approximate the intrinsic characteristics of the task. 

\subsection{Practical Implementation of TAE}\label{sec:4.2}
Section~\ref{sec4.1} presents a principal framework of TAE, while in this section, we will elaborate on the practical implementation of TAE, as illustrated in Figure~\ref{sec4:fig_tae}. 

The encoder $q_\theta$ is implemented as a deterministic mapping from a batch of probing data to a representation vector $z\in \mathbb{R}^m$. This module consists of a feature transformation network $\hat{q}_\theta$ and an average-pooling layer. For each pair of the input probing data $(x_k, y_k)$, $\hat{q}_\theta$ processes and projects the input into an intermediate embedding vector $z_k\in\mathbb{R}^m$, and the average-pooling layer aggregates the embeddings of each pair by taking the average element-wisely to obtain the final embedding $z\in\mathbb{R}^m$. 

Considering that the models of the environment are deterministic, we also implement the decoder $q_\psi$ as a deterministic mapping. However, when $q_\psi$ is deterministic, the probability for the predicted label used in Equation~\eqref{sec4:eq_tae} is undefined. To tractably estimate the log probabilities, we follow the common approach of using L2 distances as an approximation for the log probability~\cite{mse1, mse2, mse3}. The original objective $\mathcal{J}(\theta, \psi)$ can thus be equivalently transformed into:
\begin{equation}\label{sec4:eq_tae_impl}
\begin{aligned}
    &\hat{\mathcal{J}}(\theta, \psi) = \\ 
    &\mathbb{E}_{x_{1:n}, M}\Big[\sum_{k=1}^n \Big(y_k^M - q_\psi(q_\theta(x_{1:n}, y_{1:n}), x_k)\Big)^2\Big],
\end{aligned}
\end{equation}    
where $x_{1:n}\sim \rho(x)$, ${M}\sim P(\mathcal{M})$. Finally, we use $\hat{\mathcal{J}}(\theta, \psi)$ as the training objective of the TAE.

\begin{algorithm}[t]
\caption{Data Augmentation via Relabeling}
\label{sec4:code_relabel}
\textbf{Input}: Offline datasets $\{D_{i}\}_{i=1}^N$, pre-trained models $\{\widehat{M}_{i}\}_{i=1}^N$, meta-policy $\pi_\phi$, task representations $\{z_i\}_{i=1}^N$
\begin{algorithmic}[1]
\STATE Initialize augmentation buffers $\{D_{i}^{\text{aug}}\}_{i=1}^N$ as $\emptyset$
\FOR{$i=1, 2, \ldots, N$}
    \FOR{$k=1, 2, ..., K_1$}
        \STATE Sample state $s_k$ from $D_{i}$
        \STATE Sample $a_k\sim \pi_\phi(\cdot |s_k,z_i)$ and $(s'_k, r_k)=\widehat{M}_i(s_k, a_k)$
        \STATE $D_{i}^{\text{aug}} \leftarrow D_{i}^{\text{aug}}\cup \{(s_k, a_k, s'_k, r_k)\}$
    \ENDFOR
    \FOR{$k=1, 2, ..., K_2$}
        \STATE Sample state $s_k$ from $\{D_j\}_{j=1}^N \setminus D_i$
        \STATE Sample $a_k\sim \pi_\phi(\cdot |s_k,z_i)$ and $(s'_k, r_k)=\widehat{M}_i(s_k, a_k)$
        \STATE $D_{i}^{\text{aug}} \leftarrow D_{i}^{\text{aug}}\cup \{(s_k, a_k, s'_k, r_k)\}$
    \ENDFOR
\ENDFOR
\RETURN $\{D_{i}^{\text{aug}}\}_{i=1}^N$
\end{algorithmic}
\end{algorithm}

\subsection{Constructing the Probing Distribution}\label{sec4.3}
The training of TAE is also affected by the distribution of the probing data, whose distribution $\rho(x)$ is desired to satisfy certain requirements. Thus in this section, we investigate how to construct the probing distribution. 

Generally, we expect $\rho(x)$ to satisfy the following properties:

\textbf{1) Independent of models. }This is required by Assumption~\ref{sec4:assump1}, which states that the probing data should be sampled from an invariant distribution regardless of models $M$. 

\textbf{2) Consistent for training and evaluation. }This property requires that $\rho(x)$ should resemble the distribution which the meta-policy may encounter during evaluation. We can perceive the probing distribution $\rho(x)$ as some \textit{attention} over all possible aspects of the models, and the extracted representation $z$ is the most discriminative over $\rho(x)$. 

Given the above two desiderata, we propose to construct the training data of TAE by policy-relabeling, dynamics-relabeling, and reward-relabeling jointly. Suppose the offline datasets are represented by $\{D_{i}\}_{i=1}^N$, we first pretrain an estimated model $\widehat{M}_i$ for each task $M_i$ via supervised learning. At the beginning of each iteration, for each task $M_i$, we randomly pick $K_1$ states from $D_i$ and $K_2$ states from other datasets $\cup_j D_j \setminus D_i$ respectively. For each state $\hat{s}$, we first label its action by sampling $\hat{a}$ from the update-to-date meta-policy parameterized by $\phi$: $\hat{a}\sim \pi_{\phi}(a|s,z_i)$. The state transition $\hat{s}'$ and the reward $\hat{r}$ are further predicted by the pre-trained model $\widehat{M}_i$. The constructed tuple $\langle \hat{s}, \hat{a}, \hat{s}', \hat{r}\rangle$ is thus an augmentation sample used to train TAE. The pseudo-code for the augmentation process is listed in Algorithm~\ref{sec4:code_relabel}. 

By randomly sampling states across all of the datasets and relabeling the actions with the same meta-policy, we align the probing distribution $\rho(x)$ for each task so that the training procedure approximately satisfies property 1). Note that in the actual implementations, we are sampling from the ego dataset and other datasets with a ratio of $K_1:K_2$. Theoretically, the ratio should be set to $1:N-1$ precisely. However, in the practical implementation, we prefer a little biased ratio towards the ego dataset. This is because the estimated model $\widehat{M}_i$ may produce erroneous predictions on the states from other datasets, so we choose to strike a balance with such a biased ratio. For property 2), this is ensured by relabeling actions with the meta-policy. More details about the experiments can be found in the appendix. %\ref{appsec:experimental_details}., and we provide an ablation study on the sampling ratio in Section~\ref{sec5.3}. 

\begin{algorithm}[t]
\caption{Meta Training of GENTLE}
\label{sec4:code_gentle_train}
\textbf{Input}: Offline datasets $\{D_{i}\}_{i=1}^N$, models $\{\widehat{M}_{i}\}_{i=1}^N$, encoder-decoder $q_\theta,q_\psi$,
meta-policy $\pi_\phi$, Q-function $Q_\omega$
\begin{algorithmic}[1]
\STATE Pre-train models $\{\widehat{M}_{i}\}_{i=1}^N$ on $\{D_{i}\}_{i=1}^N$ independently by supervised learning
\FOR{$\text{epoch} = 1,2,\ldots, E$}
    \STATE Augment context data via relabeling: $\{D_{i}^{\text{aug}}\}_{i=1}^N =$ Algorithm~\ref{sec4:code_relabel}
    \FOR{$\text{gradient step} = 1,2,\cdots,S$}
    \STATE // TAE training
    \STATE Sample context batches $\{B_{i}^{\text{c}}\}_{i=1}^N$ from $\{D_{i}^{\text{aug}}\cup D_{i}\}_{i=1}^N$
    \STATE Update $\theta,\psi$ by maximizing Eq.~\ref{sec4:eq_tae_impl} on $\{B_{i}^{\text{c}}\}_{i=1}^N$
    \STATE // Policy optimization
    \STATE Compute representation for each task $\{z_i=q_\theta(B_i^c)\}_{i=1}^N$
    \STATE Sample RL batches $\{B_{i}\}_{i=1}^N$ from offline datasets and concatenate the states in $\{B_{i}\}_{i=1}^N$ with $\{z_i\}_{i=1}^N$
    \STATE Update $\phi,\omega$ on $\{B_{i}\}_{i=1}^N$ by TD3+BC
    \ENDFOR
\ENDFOR
\RETURN $\pi_\phi$
\end{algorithmic}
\end{algorithm}

\iffalse
\begin{algorithm}[t]
\caption{Meta Test of GENTLE~(\textit{one-shot} protocol)}
\label{sec4:code_gentle_test}
\textbf{Input}: Test task $M$, meta-policy $\pi_\phi$, encoder $q_\theta$
\begin{algorithmic}[1]
\STATE Sample a prior $z_\text{prior}\sim p(z)$, use $\pi_\phi$ to collect a trajectory $c=\{s_j,a_j,s'_j,r_j\}_{j=1}^k$ conditioned on $z_\text{prior}$
\STATE Calculate representation $z=q_\theta(c)$ of $M$ from the collected trajectory
\STATE Evaluate $\eta(M,\pi_\phi)$ with adapted $\pi_\phi(\cdot, z)$ in $M$
\RETURN $\eta(M,\pi_\phi)$
\end{algorithmic}
\end{algorithm}
\fi

\begin{table*}[h]
\centering

\begin{subtable}[]{\textwidth}
\centering
\scalebox{0.85}{
\begin{tabular}{c|c|rrrr}
\toprule[1.5pt]
\multicolumn{1}{c}{\textbf{Environment}} & \multicolumn{1}{c}{\textbf{Task Set}}   & \multicolumn{1}{c}{\textbf{FOCAL}} & \multicolumn{1}{c}{\textbf{CORRO}} & \multicolumn{1}{c}{\textbf{BOReL}} & \multicolumn{1}{c}{\textbf{GENTLE} (Ours)}\\
\midrule
Point-Robot & \multirow{6}{*}{Train}   &$-10.04\pm\;\,\,\,\,3.68$    & $\mathbf{-5.76}\pm\;\,\,\,\,1.02$ & $-15.38\pm\;\,3.37$  &  $-6.46\pm\;\,1.57$ \\
Ant-Dir &   & $490.21\pm\;\,73.80$  & $-2.48\pm\;\,14.30$  & $70.32\pm48.77$ &  $\mathbf{570.20}\pm60.81$ \\
Cheetah-Vel &    & $-221.97\pm\;\,44.04$    &   $-384.03\pm\;\,28.34$   &  $-257.82\pm23.87$    &  $\mathbf{-210.77}\pm42.32$      \\
Cheetah-Dir &    & $1449.08\pm182.48$     &   $1350.09\pm124.88$   &   $770.86\pm25.98$       & $\mathbf{1559.95}\pm33.89$  \\
Hopper-Params &    & $343.82\pm\;\,27.83$    &   $154.62\pm\;\,23.86$     & $185.33\pm36.91$   &   $\mathbf{354.58}\pm35.77$    \\
Walker-Params &   & $578.42\pm\;\,53.58$    &    $295.91\pm\;\,34.11$     & $213.21\pm45.96$     & $\mathbf{627.77}\pm41.10$    \\
\midrule
Point-Robot & \multirow{5}{*}{Test}    & $-13.94\pm\;\,\,\,\,2.66$   & $-11.38\pm\;\,\,\,\,0.67$  & $-16.33\pm\;\,2.74$   & $\mathbf{-9.71}\pm\;\,1.31$ \\
Ant-Dir &    & $451.63\pm\;\,80.82$ &  $20.70\pm\;\,17.28$    & $87.57\pm38.90$    & $\mathbf{501.67}\pm98.49$   \\
Cheetah-Vel &  & $\mathbf{-342.14}\pm\;\,66.90$  &    $-552.49\pm\;\,34.44$     &  $-451.63\pm29.05$    & -$362.83\pm46.08$      \\
Hopper-Params &   & $224.56\pm\;\,48.78$   &   $125.75\pm\;\,39.81$     & $149.74\pm14.44$    &  $\mathbf{251.60}\pm14.81$     \\
Walker-Params &    & $277.71\pm\;\,40.32$  &    $245.75\pm\;\,19.05$     &   $175.24\pm25.14$    &  $\mathbf{335.59}\pm51.55$   \\
\bottomrule[1.5pt]
\end{tabular}
}
\end{subtable}
\\[3pt]

\begin{subtable}[]{\textwidth}
\centering
\scalebox{0.85}{
\begin{tabular}{c|c|rrrr}
\toprule[1.5pt]
\multicolumn{1}{c}{\textbf{Environment}} & \multicolumn{1}{c}{\textbf{Task Set}}   & \multicolumn{1}{c}{\textbf{FOCAL}} & \multicolumn{1}{c}{\textbf{CORRO}} & \multicolumn{1}{c}{\textbf{BOReL}} & \multicolumn{1}{c}{\textbf{GENTLE} (Ours)}\\

\midrule

Point-Robot & \multirow{6}{*}{Train}    & $-17.44\pm\;\,\,\,\,3.77$    & $-15.12\pm\;\,\,\,\,1.68$  & $-21.64\pm\;\,6.23$   & $\mathbf{-13.50}\pm\;\;3.26$  \\
Ant-Dir &   &  $188.25\pm\;\,54.58$    & $5.68\pm\;\,30.59$  &$96.71\pm17.34$   & $\mathbf{596.06}\pm78.20$ \\
Cheetah-Vel &    &   $-301.06\pm\;\,32.43$       &  $-450.85\pm\;\,23.98$    &  $\mathbf{-278.21}\pm27.74$     & $-278.95\pm71.80$ \\
Cheetah-Dir &        &  $75.28\pm108.24$     &  $1099.45\pm546.08$    & $764.59\pm12.80$   &  $\mathbf{1525.66}\pm70.05$ \\
Hopper-Params &      & $192.08\pm\;\,45.43$       &  $129.44\pm\;\,22.15$   & $44.95\pm16.24$     & $\mathbf{234.63}\pm24.74$   \\
Walker-Params &      &  $294.75\pm\;\,34.19$      &  $279.76\pm\;\,52.59$    & $207.87\pm47.08$     & $\mathbf{356.70}\pm40.32$ \\
\midrule
Point-Robot & \multirow{5}{*}{Test}    & $-18.39\pm\;\,\,\,\,3.47$  & $-17.16\pm\;\,\,\,\,1.56$ & $-20.37\pm\;\,2.05$    & $\mathbf{-17.02}\pm\;\;2.60$ \\
Ant-Dir &   &  $103.20\pm\;\,56.39$    & $16.44\pm\;\,21.47$  & $63.08\pm33.67$   & $\mathbf{464.11}\pm95.74$  \\
Cheetah-Vel &    &  $-377.62\pm\;\,86.74$       &  $-616.02\pm\;\,31.93$    &  $-466.94\pm39.26$     & $\mathbf{-369.93}\pm64.77$ \\
Hopper-Params &      & $194.61\pm\;\,65.53$      &  $135.77\pm\;\,21.46$   &  $53.58\pm21.00$   &  $\mathbf{221.35}\pm27.32$  \\
Walker-Params &      &   $244.03\pm\;\,31.35$       &  $217.82\pm\;\,38.90$    &  $174.85\pm20.93$   & $\mathbf{300.66}\pm48.78$ \\
\bottomrule[1.5pt]
\end{tabular}
}
\end{subtable}

\caption{Performance on the benchmarks. Each number represents the return of the last checkpoint of the meta-policy, averaged over 8 random seeds, $\pm$ represents standard deviation. Top: given-context performance. Bottom: one-shot performance.}
\label{sec5.2:baselines}
\end{table*}

\subsection{Overall Framework of GENTLE}\label{sec4.4}
We summarize the overall meta-training framework of GENTLE in Algorithm~\ref{sec4:code_gentle_train}. At the beginning of each iteration, we first augment context data with the update-to-date meta-policy and the pre-trained models to construct the probing data. The probing data is then used to optimize the TAE as well as to compute the task representation. After detaching the gradient w.r.t. the encoder, the representation will be concatenated to raw observations for downstream offline policy optimization, which is implemented as TD3+BC. GENTLE iterates between the construction of probing data and the optimization of the meta policy until convergence. 

At test time, we test GENTLE with both \textit{given-context} protocol and \textit{one-shot} protocol. The former assumes that the meta-policy is given access to a dataset collected in testing task to serve as context $B^c$. In the latter, we first collect a trajectory as context $B^c$ with $z_{\text{prior}}$ sampled from a prior distribution, and calculate task representation $z=q_\theta(B^c)$. Then the meta-policy is evaluated by conditioning on the calculated representation. In the practical implementation, we scale the range of $z$ to $(-1,1)$ and set $z_{\rm prior}$ to all zeros. %The pseudo-code for \textit{one-shot} meta-testing is listed in Algorithm~\ref{sec4:code_gentle_test}. 

\section{Experiments}
\begin{table*}[htbp]
\centering
\scalebox{0.85}{
\begin{tabular}{c|c|rrrr}
\toprule[1.5pt]
\multicolumn{1}{c|}{\multirow{2}{*}{\textbf{Environment}}} & \multirow{2}{*}{\textbf{Task Set}}  & \multicolumn{1}{c}{\textbf{GENTLE}}  &\multicolumn{1}{c}{\textbf{GENTLE w/o}}  & \multicolumn{1}{c}{\textbf{GENTLE w/o}} & \multicolumn{1}{c}{\multirow{2}{*}{\textbf{GENTLE}}}        \\  
\multicolumn{1}{c|}{}  &   &   \multicolumn{1}{c}{\textbf{Contrastive}}  & \multicolumn{1}{c}{\textbf{Relabel}} &  \multicolumn{1}{c}{\textbf{PolicyRelabel}}  &       \\ 

\midrule

Point-Robot & \multirow{6}{*}{Train}    & $-18.50\pm\;\,\,\,\,3.20$   &$\mathbf{-12.80}\pm\;\,\,\,\,0.98$    & $-16.91\pm\;\,3.59$   & $-13.50\pm\;\,3.26$  \\
Ant-Dir &   &  $509.20\pm106.19$   &$206.56\pm\;\,78.00$  & $437.68\pm91.65$    & $\mathbf{596.06}\pm78.20$ \\
Cheetah-Vel &    & $\mathbf{-263.57}\pm\;\,39.01$        &  $-367.84\pm\;\,51.50$    &   $-304.29\pm60.17$    & $-278.95\pm71.80$ \\
Cheetah-Dir &        &  $1465.21\pm141.88$  &  $91.33\pm726.57$     & $1512.32\pm89.38$  &  $\mathbf{1525.66}\pm70.05$ \\
Hopper-Params &      & $199.94\pm\;\,34.45$   &$164.95\pm\;\,51.91$      &     $200.13\pm35.19$   & $\mathbf{234.63}\pm24.74$   \\
Walker-Params &      &  $284.66\pm\;\,53.11$        & $329.40\pm\;\,46.80$     & \;\;\;$336.03\pm50.99$    & $\mathbf{356.70}\pm40.32$ \\
\midrule
Point-Robot & \multirow{5}{*}{Test}    & $-19.21\pm\;\,\,\,\,1.47$  & $\mathbf{-15.51}\pm\;\,\,\,\,1.36$    & $-17.71\pm\;\,3.32$   & $-17.02\pm\,\,\, 2.60$ \\
Ant-Dir &   &  $376.37\pm110.18$ & $125.99\pm\;\,61.42$   & $356.20\pm75.99$   & $\mathbf{464.11}\pm95.74$  \\
Cheetah-Vel &    &  $-415.44\pm\;\,45.16$   & $-439.77\pm\;\,63.57$     & $-448.27\pm76.54$    & $\mathbf{-369.93}\pm64.77$ \\
Hopper-Params &      & $199.78\pm\;\,31.75$    &  $144.22\pm\;\,56.86$     &  \;\;\;$188.71\pm29.90$   &  $\mathbf{221.35}\pm27.32$  \\
Walker-Params &      &   $232.79\pm\;\,36.21$         & $279.40\pm\;\,66.80$      &   \;\;\,$292.05\pm61.89$   & $\mathbf{300.66}\pm48.78$ \\
\bottomrule[1.5pt]
\end{tabular}
}
\caption{Performance of GENTLE variants on one-shot protocol. Each number represents the return of the last checkpoint of the meta-policy, averaged over 8 random seeds, $\pm$ represents the standard deviation.}
\label{sec5.3:ablation}
\end{table*}
In this section, we carry out extensive experiments to compare GENTLE against other OMRL baseline methods and provide an ablation study on the design choices of GENTLE. We also illustrate the learned task representations as a qualitative assessment of the proposed method. We release our code at \url{https://github.com/LAMDA-RL/GENTLE}.

\subsection{Baselines and Benchmarks}
Following the experimental setup in prior studies~\cite{focal,corro}, we construct a 2D navigation environment and several multi-task MuJoCo~\cite{mujoco} environments to evaluate our algorithm. 
To comply with the data limitation on the number of training tasks, we sample 10 training tasks and 10 testing tasks for each environment (except for Cheetah-Dir which only has two tasks). For offline dataset generation, we train a SAC~\cite{sac} agent to expert level on each task and then collect trajectories as the offline datasets to simulate the data limitation on behavior diversity. %, we roll out the expert-level agent to collect some trajectories as the offline datasets. 

To evaluate the performance of GENTLE, We compare it with the following OMRL methods: \textbf{FOCAL}~\cite{focal} employs distance metric learning to train the context encoder. \textbf{CORRO}~\cite{corro} employs contrastive learning to train the context encoder using InfoNCE loss. \textbf{BOReL}~\cite{borel} employs a variational autoencoder to learn task embeddings by maximizing of the evidence lower bound.

Note that in the original implementation, FOCAL uses BRAC~\cite{brac}, CORRO and BOReL use SAC as their backbone offline RL algorithms, while we use TD3+BC as the backbone algorithm. To provide a fair comparison, we also implement them with TD3+BC, and conduct the experiments with both the original baselines and the re-implemented baselines. We place the results of the original baselines in the appendix%~\ref{appsec:baseline_algo}
. Finally, we use a variant of BOReL without oracle reward relabeling in the experiments for a fair comparison.

\subsection{Main Results}
We evaluate GENTLE alongside other baselines across the given-context protocol and the one-shot protocol. As shown in Table \ref{sec5.2:baselines}, it is evident that GENTLE significantly outperforms other baselines in almost all scenarios. 
When the context is given, all considered methods exhibit reasonable performance. However, it is noteworthy that GENTLE slightly surpasses the performance of the baseline methods, which indicates the efficacy of the representations extracted by GENTLE. We witness a sharp drop in the baseline methods such as FOCAL when switching to one-shot protocol, which is primarily attributed to the context distribution shift between training and online adaptation. In contrast, GENTLE remarkably sustains a high-performing policy under the one-shot protocol, thereby showcasing the remarkable generalization capability of GENTLE's representation encoder during online adaptation.

\subsection{Illustration of the Representations}\label{sec5.3}
We dive deeper to examine the learned representations of each algorithm. To illustrate the quality of the learned representations, we use the meta-policy to collect context data in the testing tasks and employ the learned encoder to predict the task representations. For each task, we obtain a total number of 400 representation vectors and employ t-SNE~\cite{tsne} to project them onto a two-dimensional plane. The results, as depicted in Figure \ref{sec5.3:tsne}, reveal that the representations predicted by GENTLE naturally form distinct clusters based on their task IDs, signifying the proficiency of GENTLE's encoder in deriving effective task representation even for the testing tasks. On the contrary, FOCAL, CORRO, and BOReL fail to distinguish the representations with online context. The predicted task representations are intertwined and lack clear distinctions in the projected space.

\begin{figure}[ht]
  \centering
  \includegraphics[width=0.8\linewidth]{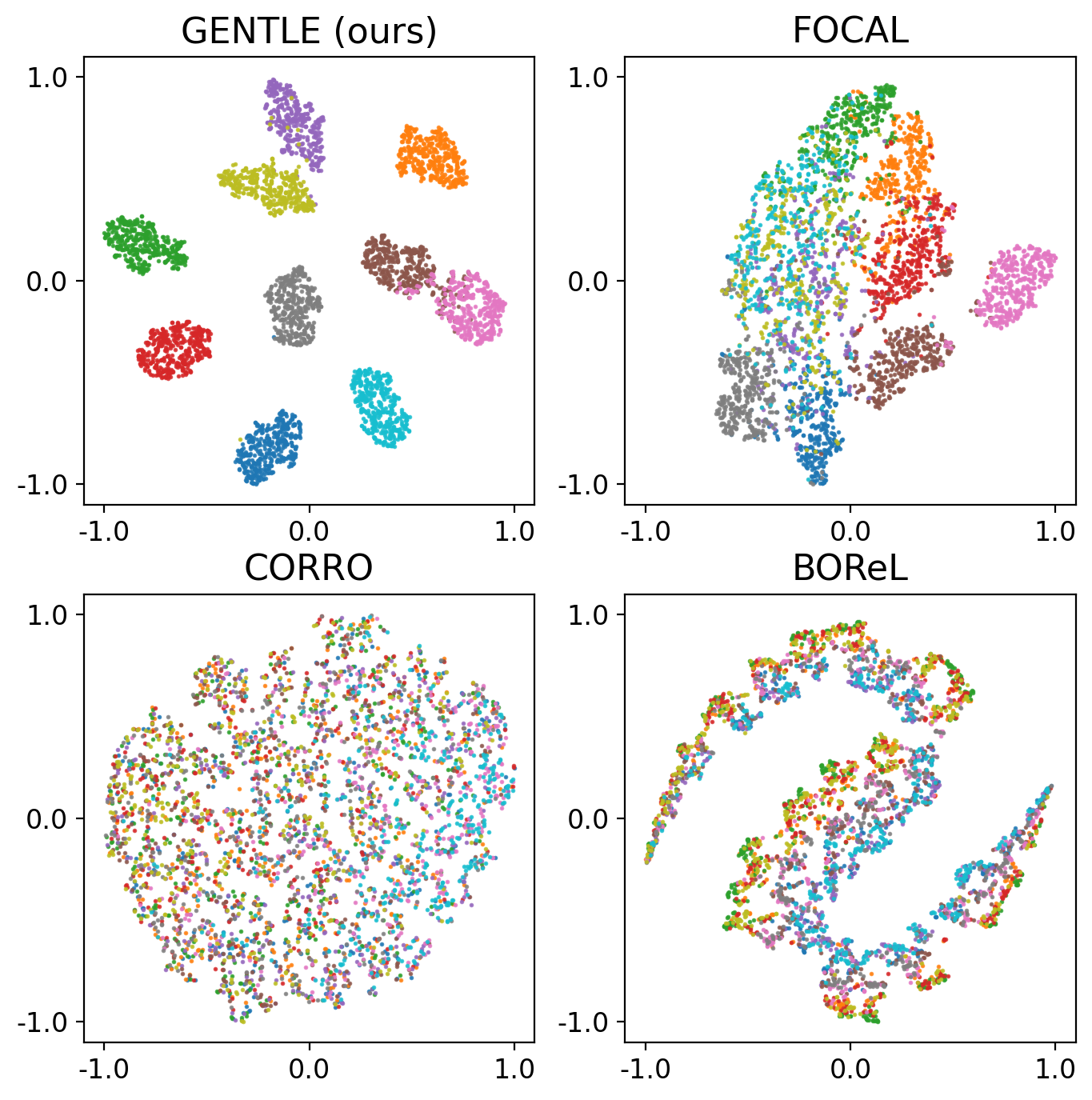}
  \caption{Visualization of the learned representations drawn from 10 testing tasks in Ant-Dir. Each point represents an embedding vector extracted from online context, which is color-coded according to task identity.}
  \label{sec5.3:tsne}
\end{figure}

\subsection{Ablation Study}\label{sec5.4}
\subsubsection{Ablation on algorithm components.}
The core ingredients and innovations of GENTLE are 1) the TAE structure trained by reconstruction of rewards and transitions, and 2) the construction of probing data used to train TAE. We investigate the necessity of these components.

We introduce several variants of GENTLE. Specifically, we replace TAE's objective with the contrastive-style objective used in FOCAL, and term this variant as GENTLE-Contrastive. We create another variant, GENTLE without Relabel, by skipping the relabeling process and training TAE directly with the offline datasets. The last variant, GENTLE without PolicyRelabel, skips policy-relabeling and uses the dataset action for relabeling. The results under one-shot protocol are listed in Table \ref{sec5.3:ablation}. 
By comparing GENTLE and GENTLE-Contrastive, we find that the reconstruction objective does offer benefits over the contrastive-style objective. The variant without relabeling shows severely degenerated performance in certain tasks, particularly in Ant-Dir and Cheetah-Dir. Without the process of relabeling, the context encoder tends to overfit to training data distribution. Finally, although the variant without policy-relabeling shows favorable results on all tasks, its performance still lags behind GENTLE. This exemplifies the importance of the consistency property for the probing data distribution. 

\subsubsection{Ablation on sampling ratio.}
To construct the probing data, we sample them from the ego dataset and other datasets with a ratio of $K_1:K_2$. We investigate the influence of this ratio. We conduct experiments on Ant-Dir, with a series of sampling ratios: 1:0, 1:1, 1:3, 1:6, 1:9, 1:12, 1:15. Specifically, a ratio of 1:0 signifies exclusive sampling from the ego task dataset, while a ratio of 1:9 implies comprehensive sampling from all other task datasets. And for ratios 1:12 and 1:15, we downsample the ego task dataset. The results are depicted in Figure \ref{sec5.4:ratio}.
The ratio of 1:0 exhibits the poorest performance, attributed to its reliance solely on the ego dataset which fails to ensure property 1) in Section \ref{sec4.3}. Increasing the sampled number of other task datasets leads to improved performance. Notably, larger ratios yield similar or slightly worse performance, and ratios 1:12 and 1:15 result in instability and performance drop, which, as we stated before, can be attributed to the estimation error of the pre-trained dynamics models. Besides, a larger ratio also requires more computation. Based on the above considerations, we opt for a balanced ratio of 1:3 in all of the experiments.  
\begin{figure}[ht]
  \centering
  \includegraphics[width=1.0\linewidth]{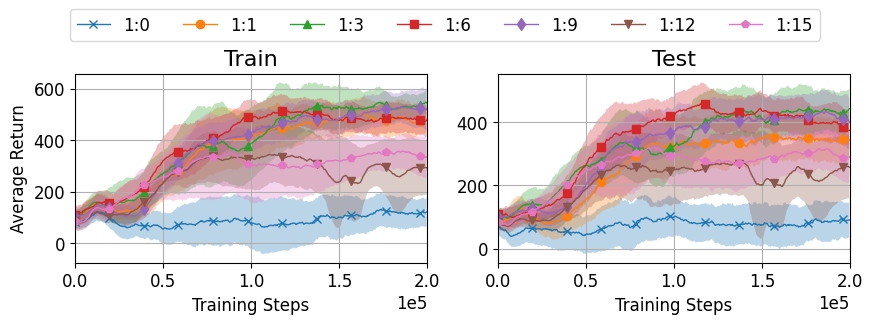}
  \caption{Return curves of GENTLE on training and testing tasks in Ant-Dir across 7 sampling ratios.} %The shaded area represents 1 standard deviation from the mean across 8 random seeds.}
  \label{sec5.4:ratio}
\end{figure}

\subsubsection{Ablation on training tasks and behavior diversity.}
GENTLE is proposed to tackle data limitations on training tasks and behavior diversity. To inspect how GENTLE adapts to the former, we vary the number of training tasks between 4-10 while leaving testing tasks unchanged. As shown in Figure~\ref{sec5.4:task_num}, a small number of training tasks significantly diminishes GENTLE's generalization over testing tasks. With an increase in the number of training tasks, GENTLE exhibits enhanced generalization performance.
\begin{figure}[ht]
  \centering
  \includegraphics[width=0.9\linewidth]{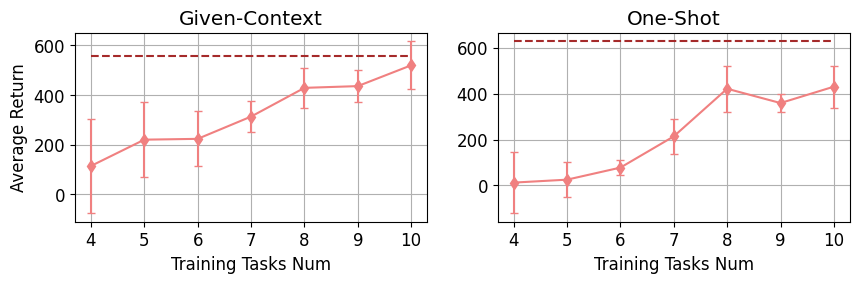}
  \caption{Performance of GENTLE on Ant-Dir testing tasks under different numbers of training tasks across 8 seeds. Dotted line represents the performance on training tasks.}
  \label{sec5.4:task_num}
\end{figure}

% \textcolor{red}{To inspect how the algorithms adapt with improved diversity, we use a medium-quality policy to collect medium context data, and use $5$ logged checkpoints of the data-collecting policy to collect mixed context data. The medium and mixed dataset is only used to train the encoder, while the policy is still optimized with expert datasets. As shown in Table~\ref{table:diversity}, FOCAL's performance witnessed significant improvement since its encoder is trained on a more diverse dataset, while GENTLE remains the same since the data-relabeling process already enriches the behavior diversity and aligns the distribution even with the expert context dataset.}

Lastly, we illustrate the capability of handling limited behavior diversity by inspecting how algorithms adapt to improved diversity. We use a medium-level policy to collect \textit{medium} context data, and use 5 logged checkpoints of policy to collect \textit{mixed} context data. The \textit{medium} and \textit{mixed} data are only used to train the context encoder, while the policy is still optimized with expert datasets. As shown in Table~\ref{table:diversity}, FOCAL's performance witnessed significant improvement as the behavior diversity improves (from \textit{expert} to \textit{mixed}), while GENTLE remains approximately the same since the data-relabeling process already enriches the behavior diversity and aligns the distribution even with the \textit{expert} context.

\begin{table}[htbp]
\centering
\scalebox{0.7}{
\begin{tabular}{c|c|rrr}
\toprule[1.5pt]
\textbf{Algorithm} & \textbf{Task Set} & \textbf{Expert} &  \textbf{Medium} &  \textbf{Mixed}  \\
\midrule
\multirow{2}{*}{\textbf{FOCAL}} & Train & $188.25\pm54.58$ & $235.69\pm97.81$  & $\mathbf{353.43}\pm68.28$ \\ 
  &Test & $103.20\pm56.39$ & $124.58\pm58.95$ & $\mathbf{213.48}\pm81.40$    \\ 
\midrule
\multirow{2}{*}{\textbf{GENTLE}} & Train &  $\mathbf{596.06}\pm78.20$ & $507.32\pm50.18$ & $571.41\pm97.11$ \\ 
 & Test &  $464.11\pm95.74$ & $475.11\pm85.40$ & $\mathbf{481.10}\pm80.07$  \\ 
\bottomrule[1.5pt]
\end{tabular}
}
\caption{Performance on one-shot protocol in Ant-Dir across 8 seeds under different types of context training datasets.}
\label{table:diversity}
\end{table}

\section{Conclusion and Future Work}
In this paper, we propose an innovative OMRL algorithm called GENTLE. We adopt a novel structure of Task Auto-Encoder (TAE), which incorporates an encoder-decoder framework trained by reconstruction of rewards and transitions. We also employ relabeling to construct pseudo-transitions, which aligns the TAE's training data distribution with the testing data distribution during meta-adaptation. Our experimental results show GENTLE's superior performance in diverse environments and tasks. The ablation studies emphasize the necessity of each component of GENTLE.

Notwithstanding the achievements, our work leaves some aspects unaddressed. We lack provisions for sparse reward settings, and do not tackle the development of exploration policy during meta-testing. We leave these for future work.% that can effectively explore at the meta-test time, which may potentially enhance the encoder's ability to infer task representations. 

\section*{Acknowledgements}
This work is supported by the National Key R\&D Program of China (2022ZD0114804), the National Science Foundation of China (62276126, 62250069), and the Fundamental Research Funds for the Central Universities (14380010).

\bibliography{aaai24}

\end{document}